\DeclareMathOperator{\bcw}{{\boldsymbol{\scriptstyle\mathcal{W}}}}
\DeclareMathOperator{\T}{\mathsf{T}}
\DeclareMathOperator{\E}{\mathds{E}}
\DeclareMathOperator{\w}{\boldsymbol{w}}
\DeclareMathOperator{\x}{\boldsymbol{x}}
\DeclareMathOperator{\s}{\boldsymbol{s}}
\DeclareMathOperator{\g}{\boldsymbol{g}}
\theoremstyle{plain}
\newtheorem{definition}{Definition}
\newtheorem{assumption}{Assumption}
\newtheorem{theorem}{Theorem}
\newtheorem{corollary}{Corollary}
\newtheorem{lemma}{Lemma}
\title{Graph-Homomorphic Perturbations for Private Decentralized Learning}
\name{Stefan Vlaski and Ali H. Sayed}
\address{School of Engineering, \'{E}cole Polytechnique F\'{e}d\'{e}rale de Lausanne
\thanks{Emails:\{stefan.vlaski, ali.sayed\}@epfl.ch.}}
\begin{document}
%
\maketitle
\begin{abstract}
  Decentralized algorithms for stochastic optimization and learning rely on the diffusion of information as a result of repeated local exchanges of intermediate estimates. Such structures are particularly appealing in situations where agents may be hesitant to share raw data due to privacy concerns. Nevertheless, in the absence of additional privacy-preserving mechanisms, the exchange of local estimates, which are generated based on private data can allow for the inference of the data itself. The most common mechanism for guaranteeing privacy is the addition of perturbations to local estimates before broadcasting. These perturbations are generally chosen independently at every agent, resulting in a significant performance loss. We propose an alternative scheme, which constructs perturbations according to a particular nullspace condition, allowing them to be invisible (to first order in the step-size) to the network centroid, while preserving privacy guarantees. The analysis allows for general nonconvex loss functions, and is hence applicable to a large number of machine learning and signal processing problems, including deep learning.
\end{abstract}
\begin{keywords}
Decentralized optimization, learning, differential privacy, encryption.
\end{keywords}
\section{Introduction and Related Works}\label{sec:intro}
\noindent We consider a collection of \( K \) agents, where each agent \( k \) is equipped with a local loss function:
\begin{equation}
  J_k(w) \triangleq \E Q(w; \x_k)
\end{equation}
The agents are interested in pursuing a minimizer to the aggregate optimization problem:
\begin{equation}\label{eq:aggregate_problem}
  \min_{w} J(w) \triangleq \min_{w} \frac{1}{K} \sum_{k=1}^K J_k(w)
\end{equation}
While the minimizer of~\eqref{eq:aggregate_problem} can be pursued by a variety of decentralized strategies, we focus here on the Adapt-Then-Combine (ATC) diffusion strategy due to its enhanced performance in adaptive scenarios~\cite{Sayed14}:
\begin{align}
  \boldsymbol{\phi}_{k, i} =&\:  \w_{k, i-1} - \mu {\nabla Q}_k(\w_{k, i-1}; \x_{k, i}) \label{eq:adapt} \\
  \boldsymbol{w}_{k, i} =&\: \sum_{\ell \in \mathcal{N}_k} a_{\ell k} \boldsymbol{\phi}_{l, i}\label{eq:combine}
\end{align}
The diffusion strategy~\eqref{eq:adapt}--\eqref{eq:combine} has strong performance guarantees in both the (strongly) convex~\cite{Chen15transient, Chen15performance, Sayed14} and non-convex~\cite{Vlaski19nonconvexP1, Vlaski19nonconvexP2} settings. For~\eqref{eq:adapt}--\eqref{eq:combine} to minimize~\eqref{eq:aggregate_problem}, we will assume the combination weights to be symmetric and stochastic, i.e.:
\begin{equation}\label{eq:symmetry}
  a_{\ell k} = a_{k \ell}, \ \ \ \sum_{\ell = 1}^K a_{\ell k} = 1, \ \ \ a_{\ell k} \ge 0
\end{equation}
When agents are concerned about privacy, they may be hesitant to share their raw, intermediate estimates \( \boldsymbol{\phi}_{k, i} \), since they can contain significant information about their locally observed data \( \x_{k, i} \) through its evolution via the gradient. To see that this is the case, consider the least-squares loss \( Q_k(w; \boldsymbol{h}, \boldsymbol{\gamma}) \triangleq \|\boldsymbol{\gamma}-\boldsymbol{h}^{\mathsf{T}} w^o\|^2\), with stochastic gradient:
\begin{equation}
  \nabla Q(w; \boldsymbol{h}, \boldsymbol{\gamma} ) = \boldsymbol{h} \left( \boldsymbol{\gamma} - \boldsymbol{h}^{\mathsf{T}} w \right) \propto \boldsymbol{h}
\end{equation}
In other words, the stochastic gradient \( \nabla Q(w; \boldsymbol{h}, \boldsymbol{\gamma} ) \) is proportional to the raw feature \( \boldsymbol{h} \), and hence observation of iterates, which evolve according to \( \nabla Q(w; \boldsymbol{h}, \boldsymbol{\gamma} ) \) allows for the inference of \( \boldsymbol{h} \). A common strategy to ensure privacy in recursive algorithms is to perturb intermediate estimates before sharing them, resulting in~\cite{Huang14}:
\begin{align}
  \boldsymbol{\phi}_{k, i} =&\: \w_{k, i-1} - \mu {\nabla Q}_k(\w_{k, i-1}; \x_{k, i}) \label{eq:adapt_privately} \\
  \boldsymbol{\psi}_{k, i} =&\: \boldsymbol{\phi}_{k, i-1} + \boldsymbol{q}_{k, i} \label{eq:privatize} \\
  \boldsymbol{w}_{k, i} =&\: \sum_{\ell \in \mathcal{N}_k} a_{\ell k} \boldsymbol{\psi}_{l, i}\label{eq:combine_privately}
\end{align}
The added perturbation \( \boldsymbol{q}_{k, i} \) is typically chosen to follow some zero-mean Gaussian or Laplacian distribution and essentially masks the gradient \( {\nabla Q}_k(\w_{k, i-1}; \x_{k, i}) \), which contains information about the data \( \x_{k, i} \). This results in rigorous privacy guarantees (quantified by differential privacy~\cite{Dwork14}), but comes at a cost, namely non-negligible degradation in performance. To see why this is the case, we introduce the gradient noise:
\begin{equation}
  \s_{k, i} \triangleq {\nabla Q}_k(\w_{k, i-1}; \x_{k, i}) - {\nabla J}_k(\w_{k, i-1})
\end{equation}
Under this definition, recursion~\eqref{eq:adapt_privately}--\eqref{eq:combine_privately} can be written equivalently as:
\begin{align}
  \boldsymbol{\phi}_{k, i} =&\: \w_{k, i-1} - \mu {\nabla J}_k(\w_{k, i-1}) - \mu \s_{k, i} \label{eq:adapt_privately_grad_noise} \\
  \boldsymbol{\psi}_{k, i} =&\: \boldsymbol{\phi}_{k, i-1} + \boldsymbol{q}_{k, i} \\
  \boldsymbol{w}_{k, i} =&\: \sum_{\ell \in \mathcal{N}_k} a_{\ell k} \boldsymbol{\psi}_{l, i}\label{eq:combine_privately_grad_noise}
\end{align}
Inspection of~\eqref{eq:adapt_privately_grad_noise}--\eqref{eq:combine_privately_grad_noise} shows that the effect of privatizing the gradient \( {\nabla Q}_k(\w_{k, i-1}; \x_{k, i}) \) is amplification of the gradient noise term \( \mu \s_{k, i} \) by an additive term \( \boldsymbol{q}_{k, i} \).

\subsection{Related Works}
\noindent Solutions to the aggregate optimization problem~\eqref{eq:aggregate_problem} can be pursued by a variety of decentralized algorithms, including primal~\cite{Nedic09, Chen15transient, Chen15performance, Sayed14} and primal-dual~\cite{Shi15, Paolo16, Yuan18, Xin19, Jakovetic20} methods.

The notion of \( \epsilon \)-differential privacy as a means of quantifying the privacy loss encountered by sharing functions of private data is due to~\cite{Dwork06, Dwork14}, as is the Laplace mechanism, which ensures \( \epsilon \)-differential privacy by perturbing the output of the function by Laplacian noise, where the power of the perturbation is calibrated to the sensitivity of the function and the desired privacy level \( \epsilon \).

In the context of centralized optimization by means of recursive algorithms, differential privacy has been applied to gradient descent~\cite{Rajkumar12, Song13, Lee18privacy}, deep learning~\cite{Abadi16}, as well as federated learning~\cite{Geyer17,Wei20}. The decentralized setting considered in this work is studied in~\cite{Huang14, Zhang17, Li18, Showkatbakhsh19, Hou19}, where independent and identically distributed perturbations are added at each agent as in~\eqref{eq:privatize} and differential privacy is established.

Similarly to these related works, our scheme is based on stochastic gradient descent, and employs perturbations to achieve privacy. In contrast to prior works, however, locally generated perturbations at each agent will be tuned to the local graph topology, ensuring that the effect on the evolution of the network centroid is minimized, while preserving privacy guarantees. The authors in~\cite{Guo20} present a ``topology-aware'' perturbation scheme, where noise powers are tuned to the local connectivity of agents. We, on the other hand, will be constructing the actual realizations, rather than perturbation powers, to match the graph topology.

\section{Diffusion with Graph-Homomorphic Perturbations}
\noindent We generalize the scheme~\eqref{eq:adapt_privately_grad_noise}--\eqref{eq:combine_privately_grad_noise}, and allow agent \( \ell \) to send different perturbation vectors \( \boldsymbol{q}_{\ell k, i} \) to different neighbors \( k \), resulting in:
\begin{align}
  \boldsymbol{\phi}_{k, i} =&\:  \w_{k, i-1} - \mu {\nabla J}_k(\w_{k, i-1}) - \mu \s_{k, i}  \label{eq:proposed_adapt} \\
  \boldsymbol{\psi}_{k \ell, i} =&\: \boldsymbol{\phi}_{k, i} + \boldsymbol{q}_{k \ell, i} \label{eq:proposed_privatize} \\
  \boldsymbol{w}_{k, i} =&\: \sum_{\ell \in \mathcal{N}_k} a_{\ell k} \boldsymbol{\psi}_{\ell k, i} \label{eq:proposed_combine}
\end{align}
Our objective is to exploit this additional degree of freedom to construct the perturbations \( \boldsymbol{q}_{\ell k, i} \) in a manner that protects agent \( \ell \) from agent \( k \), but minimizes the negative effect on the network as a whole. Previous studies on the dynamics of the diffusion recursion without privacy guarantees have shown that the local dynamics of each agent closely track those of a network centroid after sufficient iterations, both in the convex~\cite{Chen15transient, Chen15performance} and nonconvex~\cite{Vlaski19nonconvexP1, Vlaski19nonconvexP2} setting. From~\eqref{eq:proposed_combine}, we find for the network centroid:
\begin{align}
  \w_{c, i} \triangleq&\: \frac{1}{K} \sum_{k=1}^K \w_{k, i} \notag \\
  \stackrel{\eqref{eq:proposed_combine}}{=}&\: \frac{1}{K} \sum_{k=1}^K \sum_{\ell = 1}^{K} a_{\ell k} \boldsymbol{\phi}_{\ell, i} + \frac{1}{K} \sum_{k=1}^K \sum_{\ell =1}^K a_{\ell k} \boldsymbol{q}_{\ell k, i} \notag \\
  =&\: \frac{1}{K} \sum_{\ell = 1}^{K} \left( \sum_{k=1}^K a_{\ell k}\right) \boldsymbol{\phi}_{\ell, i} + \frac{1}{K} \sum_{\ell =1}^K \sum_{k=1}^K a_{\ell k} \boldsymbol{q}_{\ell k, i} \notag \\
  \stackrel{\eqref{eq:symmetry}}{=}&\: \frac{1}{K} \sum_{\ell = 1}^{K} \boldsymbol{\phi}_{\ell, i} + \frac{1}{K} \sum_{\ell =1}^K \sum_{k=1}^K a_{\ell k} \boldsymbol{q}_{\ell k, i} \notag \\
  \stackrel{\eqref{eq:adapt_privately}}{=}&\: \w_{c, i-1} - \frac{\mu}{K} \sum_{\ell = 1}^{K} \nabla Q(\w_{k, i-1}; \x_{k, i}) \notag \\
  &\:+ \frac{1}{K} \sum_{\ell =1}^K \sum_{k=1}^K a_{\ell k} \boldsymbol{q}_{\ell k, i} \label{eq:centroid_evolution}
\end{align}
We observe that the network centroid \( \w_{c, i} \) evolves similarly to a stochastic gradient update on the aggregate loss~\eqref{eq:aggregate_problem}, perturbed by the sample mean of the weighted privacy terms \( a_{\ell k} \boldsymbol{q}_{\ell k, i} \). The key question then is whether it is possible to construct \( \boldsymbol{q}_{\ell k, i} \) in an uncoordinated manner, such that:
\begin{equation}\label{eq:desired}
  \frac{1}{K} \sum_{\ell=1}^K \sum_{k =1}^{K} a_{\ell k} \boldsymbol{q}_{\ell k, i} \stackrel{\mathrm{desired}}{=} 0
\end{equation}
while preserving the privacy of all agents. If this were the case, the evolution of the network centroid would be largely unaffected by the privacy perturbations. We say ``largely unaffected'', since the gradients \( \nabla Q(\w_{k, i-1}; \x_{k, i}) \) are evaluated at \( \w_{k, i-1} \), rather than \( \w_{c, i-1} \) and hence indirectly affected by the privacy perturbations. As such, a more detailed performance analysis is necessary, which we conduct further below. The key take-away from the analysis will be that, despite the fact that the perturbations added in~\eqref{eq:proposed_privatize} are independent of the step-size, ensuring~\eqref{eq:desired} modulates the effect of the privacy perturbations on the evolution of the centroid by a factor of the step-size \( \mu \), allowing for increasing levels of privacy perturbation as the step-size decreases. Since perturbations satisfying~\eqref{eq:desired} have a reduced effect on the evolution of the network centroid under the adjacency matrix \( A \), we will refer to them as ``graph-homomorphic''. 
\begin{definition}[\textbf{Graph-Homomorphic Perturbations}]
  A set of perturbations \( \boldsymbol{q}_{\ell k, i} \) is homomorphic for the the graph defined by the adjacency matrix \( A \triangleq [a_{\ell k}] \) if it holds with probability one that:
  \begin{equation}\label{eq:nullspace_cond}
    \frac{1}{K} \sum_{\ell=1}^K \sum_{k =1}^{K} a_{\ell k} \boldsymbol{q}_{\ell k, i} = 0
  \end{equation}
\end{definition}
\noindent While other constructions are possible, we present here a simple construction, which can be implemented locally and independently at every agent \( k \).
\begin{lemma}[\textbf{Constructing Graph-Homomorphic Perturbations}]
  Let each agent \( \ell \) sample independently from the Laplace distribution \( \boldsymbol{v}_{\ell, i} \sim \mathrm{Lap}\left(0, b_v \right) \) with variance \( \sigma_v^2 = 2 b_v^2 \). Then, the construction:
  \begin{align}\label{eq:homomorphic_construction}
    \boldsymbol{q}_{\ell k, i} = \begin{cases} \boldsymbol{v}_{\ell, i},\ \ &\mathrm{if}\ k \in \mathcal{N}_{\ell} \ \mathrm{and}\ k \neq \ell,\\ -\frac{1-a_{\ell \ell}}{a_{\ell \ell}} \boldsymbol{v}_{\ell, i}, \ \ &\mathrm{if}\ k = \ell. \end{cases}
  \end{align}
  is homomorphic for the graph described by the symmetric adjacency matrix \( A = A^{\mathsf{T}} \).
\end{lemma}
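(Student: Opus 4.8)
The plan is to verify the nullspace condition \eqref{eq:nullspace_cond} by direct substitution of the construction \eqref{eq:homomorphic_construction}, handling each source agent \( \ell \) separately. First I would fix \( \ell \) and examine the inner sum \( \sum_{k=1}^K a_{\ell k} \boldsymbol{q}_{\ell k, i} \). Since \( a_{\ell k} = 0 \) whenever \( k \notin \mathcal{N}_\ell \), only neighbors contribute, and I would split this sum into the diagonal term \( k = \ell \) and the off-diagonal terms \( k \in \mathcal{N}_\ell \) with \( k \neq \ell \).

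For the off-diagonal terms the construction sets \( \boldsymbol{q}_{\ell k, i} = \boldsymbol{v}_{\ell, i} \), so their aggregate contribution factors as \( \left( \sum_{k \neq \ell} a_{\ell k} \right) \boldsymbol{v}_{\ell, i} \). Here I would invoke the stochasticity assumption \eqref{eq:symmetry}: although it is stated in terms of column sums, the symmetry \( a_{\ell k} = a_{k \ell} \) immediately implies that each row sums to one as well, so that \( \sum_{k \neq \ell} a_{\ell k} = 1 - a_{\ell \ell} \). The off-diagonal contribution is therefore exactly \( (1 - a_{\ell \ell}) \boldsymbol{v}_{\ell, i} \). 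The diagonal term contributes \( a_{\ell \ell} \boldsymbol{q}_{\ell \ell, i} = a_{\ell \ell} \cdot \left( -\frac{1 - a_{\ell \ell}}{a_{\ell \ell}} \right) \boldsymbol{v}_{\ell, i} = -(1 - a_{\ell \ell}) \boldsymbol{v}_{\ell, i} \), which cancels the off-diagonal contribution exactly. Hence the inner sum vanishes for every \( \ell \), and summing over \( \ell \) and dividing by \( K \) yields zero.

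Since this cancellation is a deterministic algebraic identity in the scaling coefficients, it holds for \emph{any} realization of the locally sampled vectors \( \boldsymbol{v}_{\ell, i} \), which is precisely what establishes the condition with probability one. There is no genuine obstacle in this argument; the entire content lies in the choice of the diagonal scaling \( -\frac{1 - a_{\ell \ell}}{a_{\ell \ell}} \), which is reverse-engineered to balance the off-diagonal mass \( 1 - a_{\ell \ell} \) contributed by the neighbors. The only point requiring mild care is recognizing that the column-stochasticity in \eqref{eq:symmetry} transfers to row-stochasticity through symmetry, which is what renders the diagonal compensation independent of the detailed neighborhood structure. The construction also implicitly requires \( a_{\ell \ell} \neq 0 \) so that the diagonal coefficient is well-defined, which holds for any diffusion graph in which each agent assigns positive self-weight.
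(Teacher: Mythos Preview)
Your proposal is correct and matches the paper's approach exactly: the paper simply states that the result ``can be verified immediately by substitution,'' and your argument spells out precisely that substitution. There is nothing to add.
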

\begin{proof}
  The result can be verified immediately by substitution.
\end{proof}

\section{Analysis}
\subsection{Modeling Conditions}
\noindent We make the following common assumptions to facilitate the performance and privacy analysis.
\begin{assumption}[\textbf{Adjacency matrix}]\label{as:graph}
  The adjacency matrix \( A \triangleq \left[ a_{\ell k} \right] \) is symmetric and doubly-stochastic, i.e.:
  \begin{equation}
    a_{\ell k} = a_{k \ell},\ \sum_{\ell \in \mathcal{N}_k} a_{\ell k} = 1, \ a_{\ell k} = 0\ \forall\ \ell \notin \mathcal{N}_k  \end{equation}
  Furthermore, the graph described by \( A \) is connected, ensuring that:
  \begin{equation}
    \lambda_2 \triangleq \rho\left( A - \frac{1}{K} \mathds{1}\mathds{1}^{\mathsf{T}} \right) < 1
  \end{equation}
\end{assumption}
\begin{assumption}[\textbf{Smoothness}]\label{as:smoothness}
  The risk functions \( Q(\cdot; \boldsymbol{x}_k) \) have uniformly Lipschitz gradients, i.e. for all \( w_1, w_2 \), and with probability one:
  \begin{equation}
    \|\nabla Q(w_1; \boldsymbol{x}_k) - \nabla Q(w_2; \boldsymbol{x}_k)\| \le \delta \|w_1 - w_2\|
  \end{equation}
  Additionally, we impose a bound on the norm of the stochastic gradient:
  \begin{equation}
    \|\nabla Q(w; \boldsymbol{x}_k) \| \le G
  \end{equation}
\end{assumption}

\subsection{Privacy Analysis}
\noindent We now proceed to quantify the privacy loss encountered by a particular agent, when deciding to participate in the learning protocol. To quantify privacy precisely, we will employ the notion of \( \epsilon \)-differential privacy~\cite{Dwork14}. For simplicity of exposition, and without loss of generality, we will focus on establishing a privacy guarantee for agent \( 1 \). By symmetry, the same argument applies to all other agents as well.

To this end, consider an alternative scenario, where agent \( 1 \) has decided not to volunteer its private information for the diffusion of information, and its data \( \boldsymbol{x}_1 \) is replaced by some other data \( \boldsymbol{x}_1' \), following a different distribution. In this setting, implementing~\eqref{eq:proposed_adapt}--\eqref{eq:proposed_combine}, would naturally result in a different learning trajectory \( \w_{k, i}' \) at every agent \( k \), since the data \( \boldsymbol{x}_1' \) propagates through \( \nabla Q(\w_{1, i}', \boldsymbol{x}_1') \) and the diffusion of estimates through the entire network. We first quantify the sensitivity of the evolution of the algorithm~\eqref{eq:proposed_adapt}--\eqref{eq:proposed_combine}, a quantity that determines the amount of perturbation necessary to mask any particular agent~\cite{Dwork14, Huang14}.
\begin{lemma}[\textbf{Sensitivity of the diffusion algorithm}]\label{LEM:SENSITIVITY}
  The distance between the trajectories \( \w_{k, i} \) and \( \w_{k, i}' \) is bounded with probability one by:
  \begin{equation}\label{eq:sensitivity_bound}
    \Delta(i) \triangleq \max_k \| \w_{k, i} - \w_{k, i}' \| \le \mu 2 G i
  \end{equation}
\end{lemma}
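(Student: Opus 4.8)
The plan is to derive a one-step recursion for the per-agent discrepancy \( \|\w_{k,i}-\w_{k,i}'\| \) and unroll it by induction on \( i \). First I would write the combine step \eqref{eq:proposed_combine} in closed form by substituting \eqref{eq:proposed_adapt}--\eqref{eq:proposed_privatize}, obtaining
\[
  \w_{k,i} = \sum_{\ell \in \mathcal{N}_k} a_{\ell k}\left( \w_{\ell,i-1} - \mu \nabla Q(\w_{\ell,i-1};\x_{\ell,i}) + \boldsymbol{q}_{\ell k,i}\right),
\]
together with the analogous expression for \( \w_{k,i}' \) in which \( \x_{\ell,i} \) is replaced by \( \x_{\ell,i}' \) (these differ only at \( \ell=1 \)). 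The first key observation is that the perturbations \( \boldsymbol{q}_{\ell k,i} \) are generated independently of the data and are therefore identical realizations across the two scenarios, so they cancel exactly upon subtraction, leaving
\[
  \w_{k,i} - \w_{k,i}' = \sum_{\ell} a_{\ell k}\Big[(\w_{\ell,i-1} - \w_{\ell,i-1}') - \mu\big(\nabla Q(\w_{\ell,i-1};\x_{\ell,i}) - \nabla Q(\w_{\ell,i-1}';\x_{\ell,i}')\big)\Big].
\]

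Next I would bound the bracketed terms. Rather than invoking the Lipschitz property from Assumption~\ref{as:smoothness} for the agents whose data is unchanged — which would introduce an unwanted \( \delta \)-dependent amplification factor — I would bound \emph{every} gradient difference uniformly using the bounded-gradient part of Assumption~\ref{as:smoothness}: by the triangle inequality \( \|\nabla Q(\w_{\ell,i-1};\x_{\ell,i}) - \nabla Q(\w_{\ell,i-1}';\x_{\ell,i}')\| \le 2G \) with probability one, for all \( \ell \), including \( \ell=1 \). Taking norms, distributing the triangle inequality across the convex combination, and using that \( A \) is column-stochastic (Assumption~\ref{as:graph}, \( \sum_\ell a_{\ell k}=1 \)) collapses the recursion to \( \|\w_{k,i}-\w_{k,i}'\| \le \Delta(i-1) + 2\mu G \); maximizing over \( k \) yields \( \Delta(i) \le \Delta(i-1) + 2\mu G \). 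Since the two trajectories share the same initialization, \( \Delta(0)=0 \), and unrolling this one-step bound over \( i \) iterations gives \( \Delta(i) \le 2\mu G i \), as claimed.

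The step I expect to be the crux is the bounding choice in the second paragraph: the clean result, free of both \( \delta \) and \( \lambda_2 \), hinges on discarding the Lipschitz-contraction structure and instead absorbing each gradient difference into the uniform bound \( 2G \), which is legitimate precisely because of the boundedness of the stochastic gradient. Conceptually, this reflects that at each iteration agent~\( 1 \)'s differing data can inject a discrepancy of size at most \( 2\mu G \), and the doubly-stochastic combination neither amplifies nor attenuates the accumulated discrepancy in the max-norm, so after \( i \) iterations the worst-case drift is simply additive. I would also flag the modeling point that must be made explicit for the cancellation to be valid — namely that, as is standard in sensitivity computations, the algorithmic randomness (here the perturbations \( \boldsymbol{q}_{\ell k,i} \), and the sampling at agents \( \ell\neq 1 \)) is held fixed across the two scenarios while only agent~\( 1 \)'s data varies.
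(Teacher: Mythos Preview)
Your proposal is correct and matches the paper's own argument essentially line for line: cancel the common perturbations, apply the triangle inequality and the uniform gradient bound \(2G\) to every term in the convex combination, use \(\sum_\ell a_{\ell k}=1\) to obtain \(\Delta(i)\le\Delta(i-1)+2\mu G\), and iterate from \(\Delta(0)=0\). Your explicit remark about deliberately \emph{not} using the Lipschitz constant \(\delta\) and about fixing the algorithmic randomness across the two scenarios is exactly the point the paper relies on implicitly.
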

\begin{proof}
  Omitted due to space limitations.
\end{proof}
\begin{definition}[\textbf{\( \epsilon \)-differential privacy}]
  We say that the diffusion recursion~\eqref{eq:proposed_adapt}--\eqref{eq:proposed_combine} is \( \epsilon(i) \)-differentially private for agent \( 1 \) at time \( i \) if:
  \begin{equation}\label{eq:differential_privacy}
    \frac{f\left( \left\{\left\{\boldsymbol{\psi}_{1\ell, n} \right\}_{\ell \neq 1 \in \mathcal{N}_1} \right\}_{n=0}^i \right)}{f\left( \left\{\left\{\boldsymbol{\psi}'_{1\ell, n} \right\}_{\ell \neq 1 \in \mathcal{N}_1} \right\}_{n=0}^i \right)} \le e^{\epsilon(i)}
  \end{equation}
  where \( f(\cdot) \) denotes the probability density function and \( \left\{\left\{\boldsymbol{\psi}_{1\ell, n} \right\}_{\ell \neq 1 \in \mathcal{N}_1} \right\}_{n=0}^i \) collects all quantities transmitted by agent \( 1 \) to any of its neighbors during the operation of the algorithm, while excluding its local iterates \( \boldsymbol{\psi}_{11, n} \), which are kept private.
\end{definition}
\noindent In light of the fact that \( e^{\epsilon(i)} \approx 1 - \epsilon(i) \) for small \( \epsilon(i) \), relation~\eqref{eq:differential_privacy} ensures that the distribution of estimates shared by agent \( 1 \) is close to unaffected (for small \( \epsilon(i) \)), whether agent \( 1 \) uses its own private data \( \boldsymbol{x}_{1} \), or a proxy \( \x'_1 \), and as such little can be inferred about \( \boldsymbol{x}_{1} \) by observing messages shared by agent \(1\).
\begin{theorem}[\textbf{Privacy cost of the diffusion algorithm}]\label{TH:PRIVACY}
  Suppose~\eqref{eq:proposed_adapt}--\eqref{eq:proposed_combine} employs homomorphic perturbations constructed as in~\eqref{eq:homomorphic_construction}. Then, at time \( i \), algorithm~\eqref{eq:proposed_adapt}--\eqref{eq:proposed_combine} is \( \epsilon(i)\)-differentially private according to~\eqref{eq:differential_privacy}, with:
  \begin{equation}\label{eq:privacy_loss}
    \epsilon(i) = \mu \frac{G (i^2 + i)}{b_v}
  \end{equation}
\end{theorem}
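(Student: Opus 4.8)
The plan is to treat the collection of messages broadcast by agent~$1$ as the output of a composed Laplace mechanism and to bound the resulting privacy loss by combining the per-iteration sensitivity of Lemma~\ref{LEM:SENSITIVITY} with a sequential composition argument. First I would simplify the observed quantities. Under the construction~\eqref{eq:homomorphic_construction}, every outgoing perturbation of agent~$1$ equals the \emph{same} draw $\boldsymbol{v}_{1,n}$, so for each $\ell \neq 1 \in \mathcal{N}_1$ the transmitted message is $\boldsymbol{\psi}_{1\ell,n} = \boldsymbol{\phi}_{1,n} + \boldsymbol{v}_{1,n}$, independent of $\ell$. Hence the adversary effectively observes the single sequence $\{ O_n \}_{n=0}^{i}$ with $O_n \triangleq \boldsymbol{\phi}_{1,n} + \boldsymbol{v}_{1,n}$, and the density $f(\cdot)$ in~\eqref{eq:differential_privacy} is the joint density of this sequence.

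Next I would factor that joint density. Because the Laplace draws $\boldsymbol{v}_{1,n}$ are independent across $n$ and $\boldsymbol{\phi}_{1,n}$ is a function of agent~$1$'s \emph{past} iterates only (and hence of $\boldsymbol{v}_{1,0:n-1}$ together with quantities outside agent~$1$'s control), the map from the noise sequence to the output sequence is lower-triangular with unit diagonal. The change of variables therefore has Jacobian one, and the joint density factors as $f(\{O_n\}) = \prod_{n=0}^{i} f_{\mathrm{Lap}}\!\big(O_n - \boldsymbol{\phi}_{1,n}\big)$, where each $\boldsymbol{\phi}_{1,n}$ is reconstructed recursively from the observed $O_n$. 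Using $f_{\mathrm{Lap}}(u) \propto e^{-\|u\|/b_v}$ and the reverse triangle inequality, each factor of the ratio in~\eqref{eq:differential_privacy} is bounded by $\exp\!\big( \|\boldsymbol{\phi}_{1,n} - \boldsymbol{\phi}'_{1,n}\|/b_v \big)$, so that $\epsilon(i) \le \sum_{n=0}^{i} \|\boldsymbol{\phi}_{1,n} - \boldsymbol{\phi}'_{1,n}\|/b_v$.

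It then remains to bound the per-iteration signal sensitivity. Writing $\boldsymbol{\phi}_{1,n} = \boldsymbol{w}_{1,n-1} - \mu \nabla Q(\boldsymbol{w}_{1,n-1}; \boldsymbol{x}_{1,n})$ and subtracting the primed recursion, the triangle inequality together with the uniform gradient bound $\|\nabla Q\| \le G$ of Assumption~\ref{as:smoothness} gives $\| \boldsymbol{\phi}_{1,n} - \boldsymbol{\phi}'_{1,n} \| \le \| \boldsymbol{w}_{1,n-1} - \boldsymbol{w}'_{1,n-1} \| + 2\mu G \le \Delta(n-1) + 2\mu G$. Invoking the sensitivity bound $\Delta(n-1) \le 2\mu G (n-1)$ from Lemma~\ref{LEM:SENSITIVITY} yields $\| \boldsymbol{\phi}_{1,n} - \boldsymbol{\phi}'_{1,n} \| \le 2\mu G n$. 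Summing over $n$ (the $n=0$ term vanishing since the time-$0$ broadcast is the common, data-independent initialization) gives $\epsilon(i) \le \frac{1}{b_v}\sum_{n=1}^{i} 2\mu G n = \frac{\mu G\, i(i+1)}{b_v} = \frac{\mu G (i^2+i)}{b_v}$, which is the claimed expression.

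The main obstacle I anticipate is a coupling subtlety in the factorization step: the two densities in the ratio are evaluated at the \emph{same} output sequence $\{O_n\}$, whereas the sensitivity of Lemma~\ref{LEM:SENSITIVITY} is established by coupling the two trajectories through common noise realizations. I would close this gap by observing that equality of agent~$1$'s broadcast messages forces every other agent to follow an identical trajectory in the two scenarios, so that the messages \emph{incoming} to agent~$1$ coincide and the difference $\| \boldsymbol{\phi}_{1,n} - \boldsymbol{\phi}'_{1,n} \|$ is governed solely by agent~$1$'s own recursion, to which the sensitivity estimate applies. A secondary technical point is reconciling the $\ell_1$-based Laplace kernel with the $\ell_2$ norm used in the sensitivity bound, which only affects constants.
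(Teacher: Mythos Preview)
Your proposal is correct and follows essentially the same route as the paper: reduce the observed messages to the single sequence \( \boldsymbol{\phi}_{1,n}+\boldsymbol{v}_{1,n} \), factor the joint density (the paper does this by induction via Bayes' rule rather than your Jacobian argument, but the two are equivalent), bound each factor of the ratio by \( e^{\|\boldsymbol{\phi}_{1,n}-\boldsymbol{\phi}'_{1,n}\|/b_v} \) using the Laplace kernel and the triangle inequality, and then apply \( \|\boldsymbol{\phi}_{1,n}-\boldsymbol{\phi}'_{1,n}\|\le \Delta(n-1)+2\mu G = 2\mu G n \) before summing. The coupling and norm issues you flag are real but are treated at the same informal level in the paper.
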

\begin{proof}
  Omitted due to space limitations.
\end{proof}

\subsection{Performance Analysis}
\noindent In order to quantify the impact of the privacy perturbations on the performance of the algorithm, we now conduct a performance analysis in the presence of perturbations. Following the arguments in~\cite{Vlaski19nonconvexP1} for analyzing the dynamics of the unperturbed recursion~\eqref{eq:adapt}--\eqref{eq:combine} in nonconvex environments, we begin by establishing that the collection of iterates \( \{\w_{k, i}\}_{k=1}^K \) continue to cluster around the network centroid \( \w_{c, i} \).
\begin{lemma}[\textbf{Network Disagreement}]\label{LEM:DISAGREEMENT}
  Suppose the collection of agents \( \{\w_{k, i}\}_{k=1}^K \) is initialized at a common, non-informative location, say \( \w_{k, 0} = \mathrm{col}\left\{ 0, \ldots, 0 \right\} \) for all \( k \). Then, the deviation from the centroid is bounded for all \( i \ge 0 \) as:
  \begin{equation}
    \frac{1}{K} \sum_{k=1}^K \E \left\|\w_{k, i} - \w_{c, i}\right\|^2 \le \mu^2 \frac{\lambda_2^2}{(1-\lambda_2)^2} G^2 + b_v^2 \frac{2 \overline{a}}{1-\lambda_2}
  \end{equation}
  where:
  \begin{align}
    \overline{a} \triangleq \max_k \left\{ (1-a_{kk}) + \frac{(1-a_{kk})^2}{a_{kk}^2} \right\}
  \end{align}
\end{lemma}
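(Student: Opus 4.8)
The plan is to track the stacked deviation of the agents from the network centroid and show that it obeys a stable linear recursion driven by two uncorrelated sources: the (bounded) stochastic gradients and the privacy perturbations. Collect the local quantities into the network vectors $\bcw_i \triangleq \mathrm{col}\{\w_{k,i}\}$, $\boldsymbol{g}_i \triangleq \mathrm{col}\{\nabla Q(\w_{k,i-1};\x_{k,i})\}$, and $\boldsymbol{r}_i \triangleq \mathrm{col}\{\boldsymbol{r}_{k,i}\}$ with $\boldsymbol{r}_{k,i} \triangleq \sum_{\ell} a_{\ell k}\boldsymbol{q}_{\ell k,i}$, and set $\mathcal{A} \triangleq A \otimes I$. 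Then, using the symmetry of $A$, the recursion \eqref{eq:proposed_adapt}--\eqref{eq:proposed_combine} reads $\bcw_i = \mathcal{A}(\bcw_{i-1} - \mu\boldsymbol{g}_i) + \boldsymbol{r}_i$. With $\w_{c,i} = \frac{1}{K}(\mathds{1}^{\mathsf{T}}\otimes I)\bcw_i$ and $\boldsymbol{d}_i \triangleq \bcw_i - \mathds{1}\otimes\w_{c,i} = ((I-\frac{1}{K}\mathds{1}\mathds{1}^{\mathsf{T}})\otimes I)\bcw_i$, the quantity to be bounded is exactly $\frac{1}{K}\E\|\boldsymbol{d}_i\|^2$.

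Next I would project the recursion onto the disagreement subspace. Let $\mathcal{B} \triangleq (A - \frac{1}{K}\mathds{1}\mathds{1}^{\mathsf{T}})\otimes I$. Since $A$ is symmetric and doubly stochastic, $(I-\frac{1}{K}\mathds{1}\mathds{1}^{\mathsf{T}})A = A - \frac{1}{K}\mathds{1}\mathds{1}^{\mathsf{T}}$ and $\mathcal{B}(\frac{1}{K}\mathds{1}\mathds{1}^{\mathsf{T}}\otimes I)=0$, so applying $(I-\frac{1}{K}\mathds{1}\mathds{1}^{\mathsf{T}})\otimes I$ to the recursion and replacing $\bcw_{i-1}$ by $\boldsymbol{d}_{i-1}$ gives
\[ \boldsymbol{d}_i = \mathcal{B}(\boldsymbol{d}_{i-1} - \mu\boldsymbol{g}_i) + \boldsymbol{r}_i. \]
The crucial simplification is that the homomorphic property \eqref{eq:nullspace_cond} states precisely that $(\mathds{1}^{\mathsf{T}}\otimes I)\boldsymbol{r}_i = \sum_{\ell,k}a_{\ell k}\boldsymbol{q}_{\ell k,i}=0$, i.e.\ $\boldsymbol{r}_i$ already lies in the disagreement subspace, so it survives the projection unattenuated, $((I-\frac{1}{K}\mathds{1}\mathds{1}^{\mathsf{T}})\otimes I)\boldsymbol{r}_i = \boldsymbol{r}_i$. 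By symmetry, $\|\mathcal{B}\| = \rho(A-\frac{1}{K}\mathds{1}\mathds{1}^{\mathsf{T}}) = \lambda_2 < 1$ by Assumption~\ref{as:graph}.

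Then I would take squared norms and expectations. Because the fresh Laplace noise in $\boldsymbol{r}_i$ is zero-mean and independent of the data and of the past, the cross term vanishes and $\E\|\boldsymbol{d}_i\|^2 = \E\|\mathcal{B}(\boldsymbol{d}_{i-1}-\mu\boldsymbol{g}_i)\|^2 + \E\|\boldsymbol{r}_i\|^2$. For the first term I would use $\|\mathcal{B}x\|^2 \le \lambda_2^2\|x\|^2$ followed by Jensen's inequality $\|a+b\|^2 \le \frac{1}{t}\|a\|^2 + \frac{1}{1-t}\|b\|^2$ with $t=\lambda_2$, obtaining $\lambda_2\E\|\boldsymbol{d}_{i-1}\|^2 + \frac{\lambda_2^2}{1-\lambda_2}\mu^2\E\|\boldsymbol{g}_i\|^2$, and then bound $\|\boldsymbol{g}_i\|^2 \le KG^2$ with probability one via Assumption~\ref{as:smoothness}. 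For the perturbation term I would compute from \eqref{eq:homomorphic_construction} that $\boldsymbol{r}_{k,i} = -(1-a_{kk})\boldsymbol{v}_{k,i} + \sum_{\ell\neq k}a_{\ell k}\boldsymbol{v}_{\ell,i}$, so by independence of the $\boldsymbol{v}_{\ell,i}$, $\E\|\boldsymbol{r}_{k,i}\|^2 = \sigma_v^2\big((1-a_{kk})^2 + \sum_{\ell\neq k}a_{\ell k}^2\big) \le \sigma_v^2\,\overline{a}$, using $(1-a_{kk})^2 \le \frac{(1-a_{kk})^2}{a_{kk}^2}$ and $\sum_{\ell\neq k}a_{\ell k}^2 \le \sum_{\ell\neq k}a_{\ell k} = 1-a_{kk}$; hence $\E\|\boldsymbol{r}_i\|^2 \le K\sigma_v^2\overline{a} = 2Kb_v^2\overline{a}$. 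This yields the scalar recursion $\E\|\boldsymbol{d}_i\|^2 \le \lambda_2\E\|\boldsymbol{d}_{i-1}\|^2 + \frac{\lambda_2^2}{1-\lambda_2}\mu^2 KG^2 + 2Kb_v^2\overline{a}$, which, started from $\boldsymbol{d}_0 = 0$ (common initialization) and summed as a geometric series with ratio $\lambda_2$, gives $\E\|\boldsymbol{d}_i\|^2 \le \frac{1}{1-\lambda_2}\big(\frac{\lambda_2^2}{1-\lambda_2}\mu^2 KG^2 + 2Kb_v^2\overline{a}\big)$. Dividing by $K$ produces the claim.

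The individual steps are routine; the only genuinely delicate part is the perturbation variance bound, namely showing that the homomorphic construction injects disagreement noise whose per-agent power is controlled exactly by $\overline{a}$. This is where the specific form of the self-perturbation $-\frac{1-a_{kk}}{a_{kk}}\boldsymbol{v}_{k,i}$ (responsible for the $\frac{(1-a_{kk})^2}{a_{kk}^2}$ term) must be combined correctly with the neighbor contributions, and where it is essential to recognize via \eqref{eq:nullspace_cond} that $\boldsymbol{r}_i$ is untouched by the projection. Overlooking the latter would wrongly attenuate the perturbation by a factor $\lambda_2$ and misstate the \emph{step-size-independent} cost that privacy imposes on the network disagreement.
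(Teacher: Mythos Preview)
Your proposal is correct and follows essentially the same argument as the paper: stack the recursion, project onto the consensus-orthogonal subspace using $\mathcal{B}=(A-\frac{1}{K}\mathds{1}\mathds{1}^{\mathsf{T}})\otimes I$, apply the spectral bound $\|\mathcal{B}\|=\lambda_2$ together with the weighted Jensen split $\|a+b\|^2\le\frac{1}{\lambda_2}\|a\|^2+\frac{1}{1-\lambda_2}\|b\|^2$, bound the perturbation variance by $2K\overline{a}b_v^2$, and sum the resulting geometric recursion. The only cosmetic difference is that the paper bounds $\E\|\boldsymbol{r}_{k,i}\|^2$ via Jensen's inequality on the convex combination $\sum_\ell a_{\ell k}\boldsymbol{q}_{\ell k,i}$, whereas you exploit independence of the $\boldsymbol{v}_{\ell,i}$ to get the (tighter) sum $\sigma_v^2\big((1-a_{kk})^2+\sum_{\ell\neq k}a_{\ell k}^2\big)$ before relaxing to $\overline{a}$; both routes land on the same final constant.
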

\begin{proof}
  Omitted due to space limitations.
\end{proof}
\noindent Relative to performance expressions for non-private decentralized gradient descent, we observe that the privacy perturbations account for an additional deviation on the order of \( O(b_v^2) \). Nevertheless, relation~\eqref{eq:desired} under~\eqref{eq:homomorphic_construction} allows us to establish an improved descent relation.
\begin{theorem}[\textbf{Descent relation}]\label{TH:PERFORMANCE}
  Under Assumptions~\ref{as:graph}--\ref{as:smoothness}, and for homomorphic perturbations constructed as in~\eqref{eq:homomorphic_construction}, the network centroid descends along the loss~\eqref{eq:aggregate_problem} as:
  \begin{align}\label{eq:descent_relation}
    \E J(\w_{c, i}) \le&\: \E J(\w_{c, i-1}) - \frac{\mu}{2}(1-2\mu\delta) \E \|\nabla J(\w_{c, i-1})\|^2 \notag \\
    &\:+ \frac{\mu}{2}(1+2\delta\mu)  b_v^2 \frac{2 \delta^2 \overline{a}}{1-\lambda_2} + \mu^2 2 \delta G^2 + O(\mu^3)
  \end{align}
\end{theorem}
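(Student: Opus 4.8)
The plan is to apply the descent lemma for $\delta$-smooth functions to the centroid recursion and to route every consequence of decentralization—namely that each local gradient is evaluated at $\w_{\ell,i-1}$ rather than at $\w_{c,i-1}$—into the network-disagreement bound of Lemma~\ref{LEM:DISAGREEMENT}. First I would specialize the centroid evolution~\eqref{eq:centroid_evolution}: because the perturbations are graph-homomorphic, the double sum $\frac{1}{K}\sum_{\ell,k}a_{\ell k}\boldsymbol{q}_{\ell k,i}$ vanishes identically by~\eqref{eq:nullspace_cond}. Splitting each stochastic gradient into its mean and the gradient noise $\s_{\ell,i}$ then leaves
\begin{equation}
  \w_{c,i} = \w_{c,i-1} - \mu\left(\nabla J(\w_{c,i-1}) + \boldsymbol{d}_i\right) - \frac{\mu}{K}\sum_{\ell=1}^K \s_{\ell,i},
\end{equation}
where $\boldsymbol{d}_i \triangleq \frac{1}{K}\sum_{\ell=1}^K\left(\nabla J_\ell(\w_{\ell,i-1}) - \nabla J_\ell(\w_{c,i-1})\right)$ collects the bias from evaluating the local gradients off the centroid and is measurable with respect to the history.

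Next, invoking Assumption~\ref{as:smoothness}, I would apply the quadratic upper bound $J(\w_{c,i}) \le J(\w_{c,i-1}) + \nabla J(\w_{c,i-1})^{\mathsf{T}}(\w_{c,i}-\w_{c,i-1}) + \frac{\delta}{2}\|\w_{c,i}-\w_{c,i-1}\|^2$ and take expectations conditioned on the past iterates. Since the gradient noise is zero-mean and independent of the history, it drops out of the inner-product term and contributes only its second moment to the quadratic term; the uniform bound $\|\nabla Q\|\le G$ gives $\|\s_{\ell,i}\|\le 2G$, and Jensen's inequality yields $\E\|\frac{1}{K}\sum_\ell \s_{\ell,i}\|^2 \le 4G^2$, which is the sole source of the $\mu^2 2\delta G^2$ term. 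What survives from the deterministic part is the cross term $-\mu\,\nabla J(\w_{c,i-1})^{\mathsf{T}}\boldsymbol{d}_i$ together with $\frac{\delta}{2}\mu^2\|\nabla J(\w_{c,i-1})+\boldsymbol{d}_i\|^2$.

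The decisive step is to split these mixed terms so that the factors $(1-2\mu\delta)$ and $(1+2\mu\delta)$ emerge. I would use Young's inequality $-\mu\,\nabla J^{\mathsf{T}}\boldsymbol{d}_i \le \frac{\mu}{2}\|\nabla J\|^2 + \frac{\mu}{2}\|\boldsymbol{d}_i\|^2$ and the bound $\|\nabla J + \boldsymbol{d}_i\|^2 \le 2\|\nabla J\|^2 + 2\|\boldsymbol{d}_i\|^2$. Collecting the $\|\nabla J(\w_{c,i-1})\|^2$ contributions, namely $-\mu + \frac{\mu}{2} + \delta\mu^2$, produces exactly $-\frac{\mu}{2}(1-2\mu\delta)$, while the $\|\boldsymbol{d}_i\|^2$ contributions collect into $\frac{\mu}{2}(1+2\mu\delta)\E\|\boldsymbol{d}_i\|^2$.

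Finally I would control $\E\|\boldsymbol{d}_i\|^2$ via the Lipschitz property and Jensen's inequality,
\begin{equation}
  \E\|\boldsymbol{d}_i\|^2 \le \frac{\delta^2}{K}\sum_{\ell=1}^K \E\|\w_{\ell,i-1}-\w_{c,i-1}\|^2,
\end{equation}
and substitute the disagreement bound of Lemma~\ref{LEM:DISAGREEMENT}. Its $b_v^2$ part reproduces the stated term $b_v^2\,\frac{2\delta^2\overline{a}}{1-\lambda_2}$ exactly, while its $O(\mu^2)$ part, once multiplied by the $\frac{\mu}{2}(1+2\mu\delta)$ prefactor, is of order $\mu^3$ and is absorbed into $O(\mu^3)$; taking total expectations then gives~\eqref{eq:descent_relation}. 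The main obstacle is bookkeeping rather than any single estimate: one must choose the Young's-inequality weight so that the smoothness quadratic and the cross term combine into precisely $(1\mp2\mu\delta)$, and carefully segregate the genuinely first-order terms from those that collapse into the $O(\mu^3)$ remainder.
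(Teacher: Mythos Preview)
Your proposal is correct and mirrors the paper's own argument essentially step for step: the same centroid decomposition with the homomorphic perturbations cancelling, the same introduction of the bias term $\boldsymbol{d}_i$, the descent lemma followed by conditional expectation to drop the gradient noise from the linear term, the same Young split $-\mu\nabla J^{\mathsf T}\boldsymbol{d}_i\le\frac{\mu}{2}\|\nabla J\|^2+\frac{\mu}{2}\|\boldsymbol{d}_i\|^2$ together with $\|\nabla J+\boldsymbol{d}_i\|^2\le 2\|\nabla J\|^2+2\|\boldsymbol{d}_i\|^2$, and the final appeal to Lemma~\ref{LEM:DISAGREEMENT} to bound $\E\|\boldsymbol{d}_i\|^2$ and absorb its $O(\mu^2)$ part into the $O(\mu^3)$ remainder. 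The only cosmetic difference is that the paper isolates the gradient-noise contribution to the quadratic via the exact identity $\E\|\nabla J+\boldsymbol{d}_i+\s_i\|^2=\|\nabla J+\boldsymbol{d}_i\|^2+\E\|\s_i\|^2$ (since $\nabla J+\boldsymbol{d}_i$ is measurable) rather than a separate Jensen step, but this leads to the same $\mu^2 2\delta G^2$ term.
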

\begin{proof}
  Omitted due to space limitations.
\end{proof}
\noindent Examination of~\eqref{eq:descent_relation} reveals that, despite the fact that the amount of perturbations added in~\eqref{eq:proposed_privatize} is independent of the step-size, their negative effect on the ability of the network centroid to descend along the aggregate loss \( J(w) \) is multiplied by \( \mu \), and hence decays with the step-size.
\begin{corollary}[\textbf{Convergence to stationary points}]
  Suppose \( J(w) \ge J^o \). Then, under Assumptions~\ref{as:graph}--\ref{as:smoothness}, and for homomorphic perturbations constructed as in~\eqref{eq:homomorphic_construction}, we have:
  \begin{align}
    \frac{1}{i} \sum_{n=0}^{i-1} \E \|\nabla J(\w_{c, n}\|^2 \le O\left(\frac{1}{\mu i}\right) + O(b_v^2) + O(\mu G^2)
  \end{align}
\end{corollary}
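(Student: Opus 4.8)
The plan is to derive the bound by telescoping the descent relation~\eqref{eq:descent_relation} of Theorem~\ref{TH:PERFORMANCE}, which is the standard route for controlling the running average of squared gradient norms in the nonconvex setting. First I would rearrange~\eqref{eq:descent_relation} to isolate the gradient term, so that for each iteration \( n \),
\begin{align}
  \frac{\mu}{2}(1-2\mu\delta)\, \E \|\nabla J(\w_{c, n-1})\|^2 \le&\: \E J(\w_{c, n-1}) - \E J(\w_{c, n}) \notag\\
  &\:+ \frac{\mu}{2}(1+2\delta\mu) b_v^2 \frac{2\delta^2\overline{a}}{1-\lambda_2} + \mu^2 2\delta G^2 + O(\mu^3).
\end{align}

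Next I would sum this inequality over \( n = 1, \ldots, i \). The successive risk values on the right-hand side telescope to \( \E J(\w_{c, 0}) - \E J(\w_{c, i}) \), which is bounded above by the constant \( J(\w_{c,0}) - J^o \) using the lower bound \( J(w) \ge J^o \); here \( J(\w_{c,0}) - J^o \) is a finite quantity determined by the initialization and independent of \( i \). The perturbation and gradient-noise terms are identical across iterations and hence contribute \( i \) copies each. After reindexing the gradient sum to run from \( n = 0 \) to \( i-1 \), this produces
\begin{align}
  \frac{\mu}{2}(1-2\mu\delta)\sum_{n=0}^{i-1} \E \|\nabla J(\w_{c, n})\|^2 \le&\: J(\w_{c,0}) - J^o \notag\\
  &\:+ i\left( \frac{\mu}{2}(1+2\delta\mu) b_v^2 \frac{2\delta^2\overline{a}}{1-\lambda_2} + \mu^2 2\delta G^2 + O(\mu^3) \right).
\end{align}

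Finally I would divide both sides by \( \frac{\mu}{2}(1-2\mu\delta)\, i \). For step-sizes small enough that \( 1-2\mu\delta \) is bounded away from zero (e.g. \( \mu \le 1/(4\delta) \)), the leading coefficient is \( \Theta(\mu) \): dividing the telescoped constant by \( \Theta(\mu)\,i \) yields the \( O(1/(\mu i)) \) term, dividing the \( O(\mu b_v^2) \) perturbation term by \( \Theta(\mu) \) yields \( O(b_v^2) \), and dividing the \( O(\mu^2 G^2) \) term by \( \Theta(\mu) \) yields \( O(\mu G^2) \); the residual \( O(\mu^3)/\Theta(\mu) = O(\mu^2) \) is absorbed into the other terms. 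This recovers the claimed bound. I expect no genuine obstacle in this argument, since all the analytic work is already contained in the descent relation of Theorem~\ref{TH:PERFORMANCE}; the only point requiring care is ensuring that the coefficient \( \frac{\mu}{2}(1-2\mu\delta) \) is strictly positive so that dividing through preserves the direction of the inequality, which is guaranteed by restricting to sufficiently small \( \mu \).
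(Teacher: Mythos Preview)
Your proposal is correct and follows exactly the approach indicated in the paper: rearranging the descent relation~\eqref{eq:descent_relation} to isolate the gradient term, telescoping over \( n = 1, \ldots, i \), bounding the telescoped risk difference via \( J(w) \ge J^o \), and dividing through by \( \frac{\mu}{2}(1-2\mu\delta)\, i \). There is nothing to add.
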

\begin{proof}
  The result follows after rearranging~\eqref{eq:descent_relation} and telescoping.
\end{proof}

\section{Numerical Results}
We verify the analytical results in the context of decentralized logistic regression for binary classification. Given class labels \( \boldsymbol{\gamma} \in \left\{ +1, -1 \right\} \), we construct feature vectors \( \boldsymbol{h} \) to be conditionally Gaussian, with means \( \mu_{+1} \) and \( \mu_{-1} \) respectively, i.e., \( \boldsymbol{h} \in \mathds{R}^M \) with \( f(\boldsymbol{h} | \boldsymbol{\gamma} = \gamma) = \mathcal{N}\left( \mu_{\gamma}, \sigma_h^2 \right) \). Each agent \( k \) is equipped with a local logistic loss function of the form:
\begin{equation}
  J_k(w) \triangleq \E \ln\left( 1 + e^{-\boldsymbol{\gamma} \boldsymbol{h}^{\T} w} \right) + \frac{\rho}{2} \|w\|^2
\end{equation}
We compare the performance of the ordinary diffusion recursion~\eqref{eq:adapt}--\eqref{eq:combine} with the privatized recursion~\eqref{eq:adapt_privately}--\eqref{eq:combine_privately} and the proposed scheme~\eqref{eq:proposed_adapt}--\eqref{eq:proposed_combine}, constructed according to~\eqref{eq:homomorphic_construction}. The resulting performance is illustrated in Fig.~\ref{fig:performance}.
\begin{figure}
  \includegraphics[width=.9\linewidth]{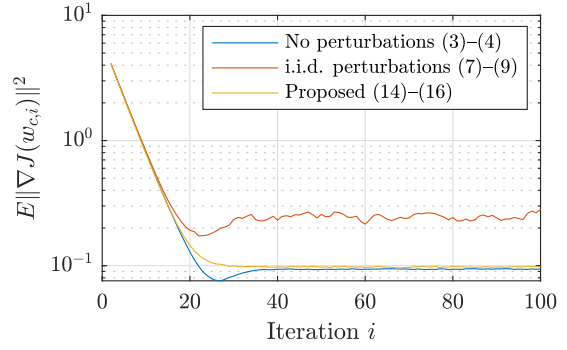}
  \centering
  \caption{Performance comparison with \( M = 5 \), \( K = 20 \), \( \mu = 1\), \( \rho = 0.1\), \( \sigma_h^2 = 1\), \( \sigma_p^2 = 2 \).}\label{fig:performance}
\end{figure}
We observe that the proposed perturbation scheme approximately matches the performance of the non-private diffusion implementation, while outperforming the implementation with i.i.d. perturbations, despite employing the same perturbation powers \( \sigma_p^2 \).

\section{Conclusion}
We have proposed a new perturbation scheme for differentially private decentralized stochastic optimization, where the perturbations are constructed at each agent to match the local graph topology. The resulting perturbations are invisible to the network centroid under the diffusion operation, while preserving \( \epsilon \)-differential privacy, and hence termed graph-homomorphic (for a particular topology). Analytical and numerical results show that the construction reduces the negative effect of privacy perturbations, while preserving differential privacy.

\bibliographystyle{IEEEbib}
{\bibliography{main}}
\balance

\end{document}